\documentclass[oribibl]{llncs}
\usepackage{graphicx}
\usepackage[T1]{fontenc}
\usepackage{booktabs}
\usepackage{subcaption}
\usepackage{algorithm}
\usepackage{algpseudocode}
\usepackage{amsmath, amssymb}
\usepackage{orcidlink}
\usepackage[fontsize=12pt]{fontsize}
\usepackage[square,numbers]{natbib}

\begin{document}

\title{Overcoming Challenges of Interpretive Structural Modeling with Large Language Models}

\author{Everett Rush \inst{1,2} \orcidlink{0000-0002-5632-5723} \and
David J. Icove \inst{1} \orcidlink{0000-0001-7403-9299} \and
Ari Kim \inst{2} \orcidlink{0009-0004-2994-3261} \and
Byung H. Park \inst{2} \orcidlink{0000-0002-2387-0880} \and \\
Michael A. Langston \inst{1} \orcidlink{0000-0001-5945-5796} }

\institute{$^1$Department of Electrical Engineering and Computer Science, \\ University of Tennessee, Knoxville, TN \\ \email{ \{langston, icove\}@tennessee.edu} \\ $^2$ Oak Ridge National Laboratory, Oak Ridge, TN \\ \email{\{rushen, parkbh, kima\}@ornl.gov} }

\maketitle

\abstract{
Interpretive Structural Modeling (ISM) is a well-known process for multi-criteria decision making. The success of ISM over other methodologies is its ability to model causal relationships, the binary scale of factors, and resulting hierarchical representation. Traditionally, the modeling process is performed by repeated interactions with subject matter experts until consensus is reached. This process is tedious, labor-intense, and most importantly limits the ability of ISM to scale to studies with hundreds of variables. Drawing on existing work of causal graph discovery with large language models (LLM) as imperfect experts, this work explores an integrated LLM-ISM approach for ISM. Pairwise, k-wise, rowwise, and full graph discovery methodologies are compared and evaluated. It is shown that causal graph discovery methods for ISM perform best using rowwise (SHD=160, F1-score=0.77) and full graph methods (SHD=135, F1-score=0.73). 

\keywords{Interpretive Structural Modeling, ISM, Causal Graphs, Generative AI, GenAI, Large Language Models, LLM}
}

\section{Introduction}

Interpretive Structural modeling (ISM) is an important multi-criteria decision making technique developed by Warfield \cite{ismWarfield}. ISM has been successfully applied to thousands of problems in supply chain, logistics, sustainability, business research, and engineering \cite{ismSurvey}. The ISM process iteratively organizes mental models into hierarchical models useful for theory building. The success of ISM over other methodologies is the ability to model causal relationships, the binary scale of factors, and the resulting hierarchical representation. 
\par \hspace{2em}
Traditionally, the modeling step in the ISM process is performed by repeated interactions with subject matter experts (SME) until consensus is reached. This process is labor intensive, error-prone, and limits the ability of ISM to scale beyond a small number of factors. Most ISM studies range from 5 to 32 factors where SMEs are responsible for annotating all pairwise interactions between factors \cite{smartISM}. It is also common for an SME to perform multiple rounds of annotations to resolve biases \cite{sushil2018}. 
\par \hspace{2em}
The ISM process is appropriate for modeling activities when there is minimal knowledge or data on the pairwise connections between factors \cite{ismSurvey}. Consequently, the process relies heavily on SMEs rather than statistical causal inference and numerical observations. Recent studies have shown great success incorporating large language models (LLMs) with causal inference frameworks \cite{causcientist, BanTaiyu2023FQTt, vashishtha2025causal}. The general knowledge from LLMs is used to guide causal algorithms. Despite these advances, there is no comprehensive study on the use of LLMs in the ISM process.
\par \hspace{2em}
Recent studies estimate that between 47\% and 56\% of all worker tasks can be completed significantly faster and at the same level of quality given the proper tooling \cite{EloundouTyna2023GaGA}. Based on the success of LLMs in causal graph discovery and rich general knowledge that LLMs bring, this paper describes an integrated approach for the use of LLMs in the ISM process. This tool is made publicly available on Github ( https://github.com/rusheniii/LLM-InterpretiveStructuralModeling ). 

\section{Background}

While LLMs have been used extensively in causal graph discovery, LLMs are just now starting to be used for ISM \cite{bertRCNN}. This section describes the ISM process and LLM methods for causal graph discovery. The entire ISM process is presented followed by current methods in causal graph discovery with LLMs.

\subsection{Interpretive Structural Modeling}
The theory development process requires answering what factors are involved in the phenomenon of interest, how the factors are related, and why the relation exists \cite{theoryContributions}. Often causality is a key component of theory building. The ISM is a process that answers the "what" and "how" of theory building.
 \par \hspace{2em}
The ISM framework has three phases: preparation, operation, and post-operation. ISM starts with identification of a problem and definition of key factors to consider. Then a contextual relation is chosen. Next, the structural self-interaction matrix (SSIM) is formed by pairwise comparison of one factor against another. The transitive closure of the SSIM is computed, the factors are partitioned into levels, and the final model is extracted. Formally, we define the transitive closure $A^{*}$ of the adjacency matrix $A$ as 
\begin{equation}
    A^{*}  = (A+I)^{n+1} = (A+I)^n \neq (A+I)^{n-1}
\end{equation} using Boolean algebra operations. We use an implementation of the Floyd-Warshall algorithm to find the transitive closure rather than matrix operations \cite{fw_algorithm}.
\par \hspace{2em}
Traditionally, the SSIM is constructed using literature reviews, SMEs, or surveys. This step is normally treated as a data collection process that can be costly and time consuming especially when SMEs or surveys are involved. Recent methods have sought to use BERT with convolutional neural networks (Bert-RCNN) to create the SSIM matrix \cite{bertRCNN}. The Bert-RCNN model is not included in this study as the model is not publicly available and no comparisons were made to other methods.

\subsection{Causal Graph Discovery}

The use of LLMS in causal graph discovery has attracted attention of researchers in recent years \cite{ancestralConstraints,vashishtha2025causal, BanTaiyu2023FQTt}. In casual graphs, variables are represented as the node set, and causal relationships between variables  represent the edge set. Some authors enforce an acyclic constraint on the directed graph and define casual graphs as directed acyclic graphs. However, the more general definition allows for interaction between variables. The ISM methodology uses the more general definition of causal graphs and allows for cycles. LLMs have been shown to be effective in discovering the causal graph for a specified set of variables.

\begin{definition}
A \textbf{causal graph} $G$ is a directed graph consisting of variables $X=\{X_1,...,X_n\}$ and a set of edges $E$. A directed edge $X_i \rightarrow X_j \in E$ denotes a direct causal influence of $X_i$ on $X_j$ with respect to $X_1,...,X_n$. Let the \textbf{descendants} of $X_i$ be $de(X_i)=\{X_k | X_i \rightarrow ... \rightarrow X_k\}$. Let the \textbf{children} of $X_i$ be $ch(X_i) = \{X_k| X_i\rightarrow X_k\}$.
\end{definition}
Note that this definition allows for cycles. For example, consider the cyclic relationship between increasing temperature and the amount of sea ice 
\cite{TubingerPairs}.
\par \hspace{2em}
There are a number of proposed approaches for discovering causal graphs using LLMs. There is the full graph discovery method that attempts to discover the entire graph in a single LLM inference step. There is rowwise prompting that uses a single prompt that asks the LLM to determine $n-1$ number of relationships against one variable in a single prompt. There is also k-wise prompting that determines $k$ relationships against one variable in a single prompt. Finally, there is pairwise prompting that asks the LLM to determine the relationship between a single pair of variables at a time. These methods are depicted in Figure \ref{fig:tp_ism} for a 12x12 adjacency matrix.

\begin{figure}[h!]
\centering
  \includegraphics[scale=0.5]{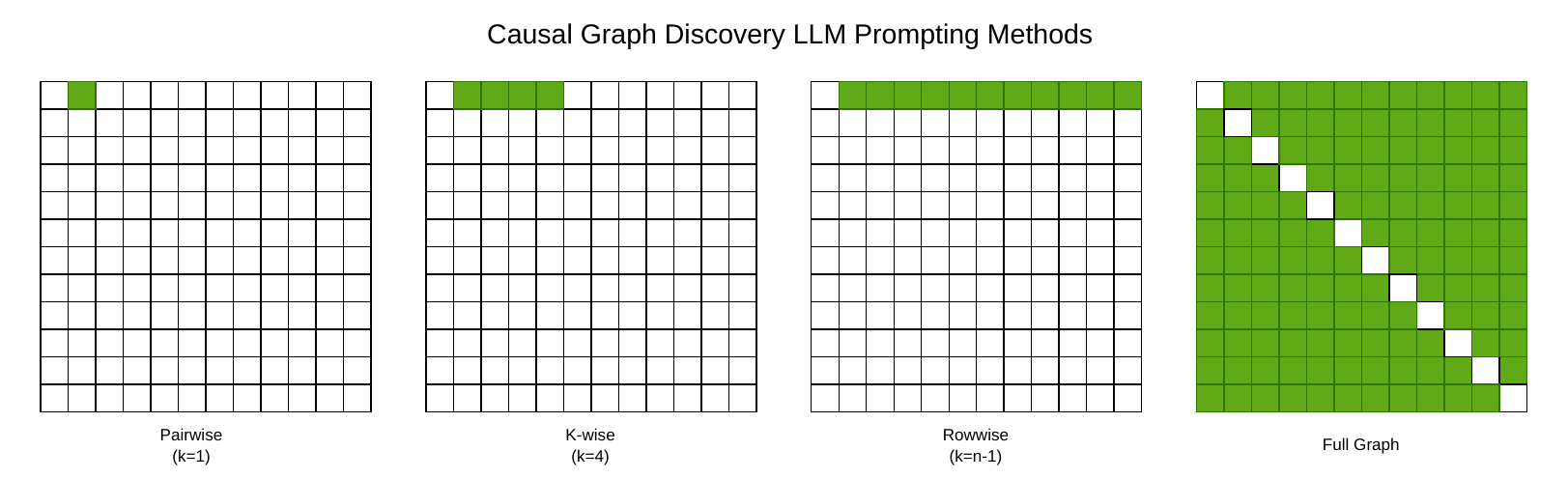}
  \caption{Prompting methods for causal graph discovery in a single prompt. Pairwise (k=1) prompting considers only two variables at once (left). As $k$ increases, the number of variables compared in a single prompt increases up to the entire row (k=n-1). Full graph prompting tries to discover the entire graph in a single prompt (right). }
  \label{fig:tp_ism}
\end{figure}

\par \hspace{2em}
One challenge in causal graph discovery is managing indirect causal relationships. For example consider causal graph $G(X, E)$, factor $x \in X$ may influence another factor $y \in X$ via an intermediate variable $z \in X$. Then $(x,z) \in E$, $(z,y) \in E$ and $(x,y) \notin E$. If the variable $z \notin X$, then the edge $(x,y)$ would be in the edge set. It is useful to define a perfect expert that has access to the true causal graph and can perfectly answer pairwise queries provided an auxiliary set of variables.

\begin{definition}
Let $G(X,E)$ be a causal graph. Given $X_i$, $X_j$, and a set of \textbf{auxiliary} nodes $O_{ij} \subseteq X$, a \textbf{perfect expert} has access to graph $G$ and responds with true to "does $X_i$ cause $X_j$" if $X_j \in ch(X_i)$ or if a direct path exists from $X_i$ to $X_j$ such that it does not contain any node in $O_{ij}$.
\label{pe}
\end{definition}

In Vashitha, et al., it is shown that the graph discovered by questioning a perfect expert can fail to find the true causal graph if the set of auxiliary nodes is empty \cite{vashishtha2025causal}. Stating this fact formally:
\begin{proposition}
 Let $G$ be the true causal graph and $\hat{G}$ be the causal graph obtained from a perfect expert when $O_{ij}=\emptyset$ $\forall i,j$ . The structural hamming distance (SHD) is shown to be $SHD(G,\hat{G})= \sum_{i=1}^{|X|} |de(X_i)| -|ch(X_i)|$. 
 \label{pe_shd}
\end{proposition}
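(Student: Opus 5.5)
The plan is to identify $\hat{G}$ explicitly and then compare it with $G$ edge by edge. By Definition~\ref{pe} with $O_{ij}=\emptyset$, the perfect expert answers ``true'' for the ordered pair $(X_i,X_j)$ exactly when $X_j\in ch(X_i)$ or some directed path $X_i\rightarrow\cdots\rightarrow X_j$ exists. With no auxiliary nodes, the second condition cannot be blocked, so it is satisfied whenever $X_j$ is reachable from $X_i$. The first step is therefore to establish that
\[
\hat{E}=\{(X_i,X_j)\;:\;X_j\in de(X_i)\},
\]
that is, $\hat{G}$ is the transitive closure of $G$ with the diagonal handled consistently on both sides. Since a single edge is a path, $ch(X_i)\subseteq de(X_i)$, and this gives the containment $E\subseteq\hat{E}$.

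The second step computes the SHD. I take SHD in the form appropriate to ISM adjacency matrices, namely the number of ordered pairs $(i,j)$ on which the adjacency matrices of $G$ and $\hat{G}$ differ. The entries of the two matrices are indexed by the same ordered pairs, so the computation can be done one row at a time. Because $E\subseteq\hat{E}$, every disagreement is an edge present in $\hat{G}$ and absent from $G$. Row $i$ of $\hat{G}$ is the indicator of $de(X_i)$ and row $i$ of $G$ is the indicator of $ch(X_i)$. Row $i$ therefore contributes $|de(X_i)\setminus ch(X_i)|=|de(X_i)|-|ch(X_i)|$, using the containment. Summing over $i=1,\dots,|X|$ gives the formula in Proposition~\ref{pe_shd}.

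I expect the main obstacle to be conventions rather than the argument itself. First, if SHD is read in the DAG sense, where an unordered pair counts once and reversals cost one, then the row decomposition is exact only when no extra edge $(X_i,X_j)\in\hat{E}\setminus E$ is paired with a reverse edge $(X_j,X_i)\in E$. Since the definition explicitly allows cycles, I will state that SHD is computed entry-wise on directed adjacency matrices, as in ISM, so each ordered pair counts separately. Second, $de(X_i)$ may contain $X_i$ itself when $X_i$ lies on a cycle, and the closure $A^{*}=(A+I)^n$ includes the diagonal. I will fix the convention that self-loops are treated the same way in $de$ and in $\hat{G}$, for example by excluding $X_i$ from both. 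With these two conventions stated, the identity follows directly from the row-by-row count.
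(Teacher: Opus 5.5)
Your argument is correct under the conventions you fix, and it is the standard one. The paper gives no proof of this proposition; it only attributes the result to Vashishtha et al., and the reasoning there (stated for DAGs) is the same as yours. With $O_{ij}=\emptyset$ the expert returns every ancestor--descendant pair, so $\hat{G}$ is the transitive closure of $G$, $E\subseteq\hat{E}$, and the SHD counts the $|de(X_i)|-|ch(X_i)|$ surplus edges in each row. Your diagonal convention is genuinely needed in the paper's cyclic setting. Otherwise every node on a cycle contributes a spurious $+1$ through $X_i\in de(X_i)$, which $\hat{G}$ never realizes because the expert is only asked about pairs with $i\neq j$.

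One correction to your remark on SHD conventions. The case you single out, $(X_i,X_j)\in\hat{E}\setminus E$ with $(X_j,X_i)\in E$, is harmless. Since $E\subseteq\hat{E}$, the edge $(X_j,X_i)$ is also in $\hat{E}$, so that pair differs in exactly one entry and costs $1$ under either convention; a genuine reversal never arises. The pair-based count departs from yours only when $X_i$ and $X_j$ lie on a common directed cycle and neither is a child of the other. Then both $(X_i,X_j)$ and $(X_j,X_i)$ lie in $\hat{E}\setminus E$, which costs $2$ entry-wise but $1$ pair-wise. For example, for the $4$-cycle $a\to b\to c\to d\to a$, the formula and the entry-wise SHD both give $8$, while the pair-based SHD is $6$. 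This configuration cannot occur in the acyclic setting of the cited source, so there every standard SHD convention agrees with the formula. In the cyclic setting, your entry-wise choice is what makes the statement true. Here it coincides with the insertion/deletion/reversal edit distance, since no reversal helps when $E\subseteq\hat{E}$.
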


\section{Methods}

Our method diverges from existing techniques by (1) using a more general definition of causal graphs that includes cycles,
 (2) discovers $SSIM^*$ rather than the $SSIM$ directly, and optionally (3) proposes an extension using retrieval augmented generation (RAG) to provide in-context learning to the LLM.
\par \hspace{2em}
Our approach directly discovers the transitive closure of the SSIM because all the analysis steps are done on the transitive closure of the SSIM. SSIM is just an intermediate artifact in the ISM process. Additionally, Proposition \ref{pe_shd} shows that there is an inherent penalty on graph discovery based on auxiliary factors when using SSIM directly. The transitive closure removes this penalty and allows for a better benchmark of the resulting structural models as a perfect expert can discover the transitive closure with any subset of variables. This is expressed formally in Proposition \ref{shd_0}. 
 
\begin{proposition}
Let $G^*$ be the transitive closure of the causal graph $G$. Let $\hat{G}$ be the causal graph generated by a perfect expert when $\mathbf{O_{ij}}\subseteq \mathbf{X}\setminus\{X_i,X_j\}$ $\forall i,j$. Then $G^{*}=\hat{G}^{*}$. 
\label{shd_0}
\end{proposition}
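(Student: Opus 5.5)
We prove $G^{*}=\hat{G}^{*}$ by two inclusions, $G \subseteq \hat{G} \subseteq G^{*}$, at the level of edge sets (ignoring the reflexive part that both closures add via $I$). Once these hold, the result follows from monotonicity and idempotence of transitive closure. From $G\subseteq\hat{G}$ we get $G^{*}\subseteq\hat{G}^{*}$. From $\hat{G}\subseteq G^{*}$ we get $\hat{G}^{*}\subseteq (G^{*})^{*}=G^{*}$. These two facts give equality.

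The first inclusion $E\subseteq \hat{E}$ comes straight from Definition \ref{pe}. If $X_i\rightarrow X_j\in E$, then $X_j\in ch(X_i)$, and the perfect expert answers true regardless of $O_{ij}$. So $X_i\rightarrow X_j\in\hat{E}$.

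The second inclusion $\hat{E}\subseteq E^{*}$ also uses Definition \ref{pe}. The expert answers true for $(X_i,X_j)$ in only two cases. Either $X_j\in ch(X_i)$, so the edge is in $E\subseteq E^{*}$. Or there is a directed path $X_i\rightarrow\cdots\rightarrow X_j$ in $G$ avoiding $O_{ij}$. That path certifies $X_j\in de(X_i)$, so $(X_i,X_j)\in E^{*}$. The choice of $O_{ij}$ is irrelevant here: any path, whether or not it avoids the auxiliary set, lies in $G$. The hypothesis $O_{ij}\subseteq X\setminus\{X_i,X_j\}$ only keeps the query well posed, since endpoints are never excluded and a direct edge is always a valid path.

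The main point needing care is not a computation but the bookkeeping of definitions. We must state clearly that "direct path" in Definition \ref{pe} means a directed path in $G$. We must also justify the closure properties: monotonicity ($E_1\subseteq E_2 \Rightarrow E_1^{*}\subseteq E_2^{*}$) and idempotence. Both are immediate from the path characterization of $A^{*}=(A+I)^{n}$ in Boolean algebra, since entry $(i,j)$ is $1$ iff there is a directed walk of length at most $n$, hence a path, from $X_i$ to $X_j$. As a sanity check, this also explains Proposition \ref{pe_shd}. With $O_{ij}=\emptyset$ we get $\hat{G}=G^{*}$ exactly, which differs from $G$ in $\sum_i |de(X_i)|-|ch(X_i)|$ edges, yet its closure still agrees with $G^{*}$.
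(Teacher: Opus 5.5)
Your proof is correct and is essentially the paper's argument. The paper proves the two inclusions $\hat{G}^*\subseteq G^*$ and $G^*\subseteq\hat{G}^*$ by contradiction, using exactly your two facts: every edge of $G$ is answered true by the expert, and every edge the expert reports comes from a path in $G$; your direct sandwich $G\subseteq\hat{G}\subseteq G^*$, with monotonicity and idempotence of the closure, is if anything cleaner, since it makes explicit the path-concatenation step that the paper compresses into the claim that a path $P(u,v)$ in $\hat{G}$ ``exists in $G$''.
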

\begin{proof}
    Proof by contradiction. 
    $(\implies)$ Assume there is an edge $(u,v)\in \hat{G}^*$ and $(u,v) \notin G^*$. By the definition of $\hat{G}$, the existence of $P(u,v)$ in $\hat{G}$ means that $P(u,v)$ exists in $G$. Then $(u,v)\in G^*$ a contradiction.

     $(\impliedby)$ Assume there is an edge $(u,v)\in G^*$ and $(u,v) \notin \hat{G}^*$. Then $P(u,v)$ exists in $G$. By the definition of $\hat{G}$, all of the edges in $G$ are also in $\hat{G}$. Then $P(u,v)$ exists in $\hat{G}$ and $(u,v)\in \hat{G}^*$ a contradiction.   
\end{proof}
\par \hspace{2em}
The main method of the LLM-ISM approach is Algorithm \ref{alg:ssim} the iterative construction of $SSIM^*$. The input is a set of variables $X$, an integer $k$ specifying the number of variables to consider in a single LLM query, and a research question. The graph $\hat{G}$ is initialized as an empty digraph. For each variable $i$ under consideration, the LLM will predict the causal edges from $i$ to the target variable. When $k=1$, the algorithm operates in pairwise mode and the LLM expert is responsible for determining whether a single edge exists in $\hat{G}$. This results in $O(n^2)$ queries to the LLM expert. When $k=n$, the algorithm provides a variable $i$ to the LLM and asks the LLM expert to determine whether $n$ edges exist in $\hat{G}$ with a single inference request to the LLM. This results in $O(n)$ queries to the LLM expert.
\par \hspace{2em}
The LLM expert is provided the research question, a variable under consideration, and a list of variables for analysis. The LLM is instructed that it is performing a task for ISM and that it should assume the persona of an expert in qualitative systems analysis. The task to perform is to determine whether the provided variable influences any of the other variables in the variable list. The LLM should for each variable in the variables list, perform a pairwise comparison with the provided factor to determine the directional relationship between the two factors. Any of the other variables in the list should be used for context. Each variable in the variable list should have an entry in the output list. The LLM should respond with "V" if the provided variable directly or indirectly influences the output variable. The LLM should respond with "O" if there is no influence or insufficient evidence of a directional influence. The LLM is advised that false positive relationships are worse than missing weak relationships. The full prompt is provided in the Appendix.

\begin{algorithm}
\caption{Iteratively Construct $SSIM^*$}
\label{alg:ssim}
\begin{algorithmic}[1] 
\Procedure{constructSsimStar}{$X, k, research\_question$}
    \State $\hat{G} \gets \{\}$ 
    \For{$i \in X$}
        \State batchedFactors $\gets$ batch($X-\{x\}$, k)
        \For { factorList $\in$ batchedFactors}
            \State predictedEdges $\gets queryLlmExpert(research\_question, i, factorList)$ 
            \For{$(u,v) \in$ predictedEdges}
                \State $\hat{G}.add((u,v))$
            \EndFor
        \EndFor
    \EndFor
    \State $\hat{G}^* \gets transitiveClosure(\hat{G})$
    \State \Return $\hat{G}^*$
\EndProcedure
\end{algorithmic}
\end{algorithm}

\subsection{Incorporating SME Knowledge}

Rather than rely on the LLM's general knowledge, we extend the main approach in Algorithm \ref{alg:ssim_with_rag} to allow for in-context learning. An additional argument is added to allow passing documents to the algorithm. The documents are all converted to text using Docling \cite{docling}. The most relevant text chunks are retrieved using a query vector created from the research question and factors under consideration. The chunks are stored and passed in the LLM query.

\begin{algorithm}
\caption{Iteratively Construct $SSIM^*$ With RAG}
\label{alg:ssim_with_rag}
\begin{algorithmic}[1] 
\Procedure{constructSsimStarWithRag}{$X, k, documents, research\_question$}
    \State $\hat{G} \gets \{\}$ 
    \For{$i \in X$}
        \State batchedFactors $\gets$ batch($X-\{x\}$, k)
        \For { factorList $\in$ batchedFactors}
            \State $textChunks \gets []$
            \For {$j \in factorList$}
                \State $queryVector \gets embed(research question, i, j)$
                \State $textChunks.add(topTextChunks(queryVector, documents))$
            \EndFor
            \State predictedEdges $\gets queryLlmExpert(research\_question, i, factorList, textChunks)$ 
            \For{$(u,v) \in$ predictedEdges}
                \State $\hat{G}.add((u,v))$
            \EndFor
        \EndFor
    \EndFor
    \State $\hat{G}^* \gets transitiveClosure(\hat{G})$
    \State \Return $\hat{G}^*$
\EndProcedure
\end{algorithmic}
\end{algorithm}

\subsection{Experimental Setup}
This study evaluates the benchmark against three models: GPT-5.2, GPT-OSS 120B, and Nemotron Super 3 \cite{openaigpt5card, openai_oss120b, nvidia2025nvidianemotron3efficient}. The hardware setup is an enterprise server with 8 H100 Nvidia GPUs. The models are all served via the OpenAI Responses API. The models for GPT-OSS 120B and Nemotron 3 Super were full floating point 8 rather than the quantized version of the models. 

\subsection{Data Collection}
The benchmark dataset consists of 9 peer-reviewed publications that reported the results of the ISM process and did not rely solely on interviews or surveys \cite{I4T, smartOrganization, automationRobots, constructionWorkers, socialMediaBurnout, navigatingGenAiInHE, circularEconomy, fruitAndVegetable, truckDriverShortage}. The factors table and final reachability matrices were extracted and stored for evaluation. 

\section{Results}

This section presents the results of each methodology in discovering the ISM final reachability graph. The evaluation datasets are described by the number of nodes and edges. Since this study is inference only, we consider the dataset as test only and directly provide SHD and F1-score metrics. 

\subsection{Graph Statistics}
 A summary view of the 9 studies used in this evaluation are in Table \ref{graph_stats}. The 9 graphs are small with number of nodes ranging from 7 to 17 and edge density between 0.31 and 0.57. The fruitAndVegetables study is smallest with only 7 nodes and 17 edges. The i4t study is largest with 17 nodes and 115 edges. The average number of nodes is 11.7 nodes and average edges is 55.4. Graphs of this size are typical in ISM studies  \cite{smartISM}.  \\

 \begin{table}
\begin{tabular}{lrrr}
\toprule
Study & Nodes & Edges & Density \\
\midrule
                      i4t & 17 & 115 & .40  \\
       smartOrganizations & 10 & 57 & .57 \\
         automationRobots & 8 & 30 & .47 \\
      constructionWorkers & 13 & 55 & .33 \\
              socialMedia & 15 & 69 & .31 \\
   genAIInHigherEducation & 9 & 37 & .46 \\
          circularEconomy & 13 & 54 & .32 \\
       fruitAndVegetables & 7 & 17 & .35 \\
      truckDriverShortage & 13 & 65 & .38 \\
\bottomrule
\end{tabular}
 \centering
 \caption{ISM Graph Statistics}
 \label{graph_stats}
\end{table}

\subsection{Methodology Evaluation}

Table \ref{rag_stats} shows the results of running three models against the benchmark dataset. The best score for metric is in bold for each model. Given a specific value of $k$ and LLM model, the total SHD was calculated by summing the SHD from each study. The SHD for a study was calculated using $SHD(G^{*}, \hat{G}^{*})$ where $G$ is the published SSIM and $\hat{G}$ is the graph output from the LLM. The true positives, false positives, and false negatives were summated across all studies to calculate the F1-score. 
\par \hspace{2em}
GPT-5.2 outperforms the other models both in terms of F1-score and SHD. The highest F1-scores were from GPT-5.2 using rowwise comparison.
GPT-5.2 rowwise achieved F1-score of 0.77 using the RAG approach.
The lowest SHD scores were obtained with GPT-OSS-120B using the full graph approach.
GPT-OSS-120B full graph achieved 135 SHD using the RAG approach.
All models performed better as the number of comparisons in a single prompt increased.

\begin{table}
\setlength{\tabcolsep}{11pt}
\begin{tabular}{lrcrcrc}
\toprule
Batch Size & \multicolumn{2}{c}{GPT-5.2} &  \multicolumn{2}{c}{Nemotron 3} & \multicolumn{2}{c}{GPT-OSS-120B} \\
\hline
 & SHD $\downarrow$ & F1-Score $\uparrow$ & SHD $\downarrow$ & F1-Score $\uparrow$ & SHD $\downarrow$ & F1-Score $\uparrow$ \\
\midrule
 Pairwise (1) & 170 & 0.75 & 190 & 0.73 & 180 & 0.74 \\
 2            & 185 & 0.74 & 180 & 0.73 & 195 & 0.72 \\
 4            & 174 & 0.74 & \textbf{175} & 0.74 & 195 & 0.72 \\
 Rowwise (n)  & \textbf{160} & \textbf{0.77} & 180 & 0.73 & 165 & \textbf{0.75} \\
 Full Graph  & 164 & 0.73 & \textbf{175} & \textbf{0.75} & \textbf{135} & 0.73 \\
\bottomrule
\end{tabular}
 \centering
 \caption{Comparison of Causal Graph Methods}
 \label{rag_stats}
\end{table}

\section{Conclusion}

Despite advances in causal graph research using LLMs, LLMs have not been widely adopted in ISM processes. This paper proposes an integrated LLM-ISM approach and explores the suitability of incorporating LLMS into the ISM process. This approach shows that ISM is a suitable application for LLMs to aid in SSIM discovery. We benchmarked pairwise, k-wise, rowwise, and full graph prompting methods against a new ISM benchmark. SHD and F1-Score are presented for GPT-5.2, Nemotron 3 Super, and GPT-OSS-120B. GPT-5.2 outperformed both Nemotron 3 Super and GPT-OSS-120B. Across all models, the causal graph discovery methods for ISM perform best using rowwise and full graph methods.

\subsection{Future Work}
This methodology can scale well to studies with more variables than traditional ISM studies. A future work is to apply this methodology to Posttramautic Stress Disorder (PTSD) care. The modeling activity is to find the hierarchical relationships between barriers and enablers during treatment and recovery of individuals with PTSD. Rather than rolling variables up into high level concepts, the proposed methodology can examine all the variables at a detailed and granular level. 
\par \hspace{2em}
Another interesting topic for future investigation is the application of this integrated LLM-ISM technique to agentic workflows. Agents powered by LLMs will have the potential to use ISM graphs as in-context learning during planning or execution to make better and more deterministic decisions. Beyond using the ISM graphs for in-context learning, the graphs may also be incorporated into the training phase of the LLMs themselves to ensure causal inference is learned.

\section{Acknowledgments}

This research used resources of the Infrastructure for Science, AI, And Computing (ISAAC) AI-as-a-Service System at the Oak Ridge National Laboratory, which is supported by the Office of Science of the U.S. Department of Energy under Contract No. DE-AC05-00OR22725.

\bibliographystyle{vancouver}
\bibliography{main}

\end{document}